\newcommand{\citet}[1]{\citeauthor{#1}~\shortcite{#1}}
\newcommand{\citep}[1]{\cite{#1}}
\newtheorem{mydef}{Definition}
\newtheorem{mytheorem}{Theorem}
\title{Constructing Abstraction Hierarchies Using a Skill-Symbol Loop}
\author{
George Konidaris \\
Departments of Computer Science and Electrical \& Computer Engineering\\
Duke University, Durham NC 27708 \\
\texttt{gdk@cs.duke.edu} 
}
\begin{document}

\maketitle

\begin{abstract}
\begin{quote}
We describe a framework for building abstraction hierarchies whereby
an agent alternates skill- and representation-acquisition phases to construct
a sequence of increasingly abstract Markov decision processes. Our formulation
builds on recent results showing that the appropriate abstract representation of a problem
is specified by the agent's skills. We describe how such a hierarchy can be used
for fast planning, and illustrate the construction of an appropriate hierarchy for the Taxi domain.
\end{quote}
\end{abstract}

\section{Introduction}

One of the core challenges of artificial intelligence is that of linking abstract decision-making
to low-level, real-world action and perception. 
Hierarchical reinforcement learning methods \cite{Barto03} approach this problem through the
 use of 
high-level 
 temporally extended macro-actions, or skills, which can significantly 
decrease planning times \cite{Sutton99}. 
Skill acquisition (or skill discovery) algorithms (recently surveyed by \citet{Hengst12}), 
aim to discover appropriate high-level skills autonomously.
However, in most hierarchical reinforcement learning research the state space does not 
change once skills have been acquired. 
An agent that has acquired high-level skills must still plan in its original low-level
state space---a potentially very difficult task when that space is high-dimensional and continuous. 
Although some of the earliest formalizations of hierarchical reinforcement learning \cite{Parr97,Dietterich00} featured 
hierarchies where both the set of available actions and the state space changed with the level
of the hierarchy, there has been almost no work on automating the representational aspects of 
such hierarchies. 

Recently, \citet{Konidaris14} considered the question of how 
to construct a symbolic representation 
suitable for  planning in high-dimensional continuous domains, given a set of high-level skills. 
The key result of that work was that the appropriate abstract representation of the
problem was directly determined by characteristics of the skills available to the agent---the skills
determine the representation, and adding new high-level skills must result in a new representation. 
 
We show that these two processes can be combined into a \textit{skill-symbol loop}: the agent
acquires a set of high-level skills, then constructs the appropriate representation for planning using them,
resulting in a new problem in which the agent can again perform skill acquisition. Repeating this process
leads to a true abstraction hierarchy where both the available skills and the state space become more
abstract at each level of the hierarchy. 
We describe the properties of the
resulting abstraction hierarchies and demonstrate the construction and use of one such hierarchy in the
Taxi domain. 

\section{Background}

Reinforcement learning problems are typically formalized as 
\textit{Markov decision processes} or MDPs, represented
by a tuple $M = (S, A, R, P, \gamma)$, where $S$ is a set 
of states, $A$ is a set of actions, $R(s, a, s')$ is the reward the
agent receives when executing action $a$ in state $s$ and
transitioning to state $s'$, $P(s' | s, a)$ is the probability
of the agent finding itself in state $s'$ having executed action $a$ in
state $s$, and $\gamma \in (0, 1]$ is a discount factor. 

We are interested in the multi-task reinforcement learning setting where, rather 
than solving a single  MDP, the agent is tasked
with solving several problems drawn from some task distribution. Each
individual problem is obtained by adding a set of start and goal states to a \textit{base MDP} that specifies 
the state and action spaces and background reward function. The agent's task is to 
 minimize the average
time required to  solve new problems
drawn from this distribution.

\subsection{Hierarchical Reinforcement Learning}

Hierarchical reinforcement learning \cite{Barto03} 
is a framework for learning and planning using higher-level actions built
out of the primitive actions available to the agent. Although other
formalizations exist---mostly notably the MAX-Q \cite{Dietterich00} and 
Hierarchy of Abstract Machines \cite{Parr97} approaches---we adopt the
\textit{options framework} \cite{Sutton99}, which models temporally abstract
macro-actions as \textit{options}.

An option $o$ consists of three components: an \textit{option policy}, $\pi_o$,
which is executed when the option is invoked; an \textit{initiation set}, 
$I_o = \{s | o \in O(s)\}$, which describes the states in which the option may
be executed; and a \textit{termination condition}, $\beta_o(s) \rightarrow [0, 1]$, 
which describes the probability that an option will terminate upon reaching state $s$.

An MDP where primitive actions are replaced by a set of possibly temporally-extended
options (some of which could simply
execute a single primitive action) is known as a \textit{semi Markov decision process} (or SMDP),
which generalizes MDPs to handle action executions that may take
more than one time step. 
An SMDP is described by a tuple $M = (S, O, R, P, \gamma)$, where $S$ is a set of states;
$O$ is a  set of options; 
$R(s', \tau | s, o)$ is the reward received when executing option $o \in O(s)$
at state $s \in S$, and arriving in state $s' \in S$ after $\tau$ time steps; 
$P(s', \tau| s, o)$ is a PDF describing the
probability of arriving in state $s' \in S$, $\tau$  time steps after executing option
 $o \in O(s)$ in state
$s \in S$; and $\gamma \in (0, 1]$ is a discount factor, as before. 
 
The problem of deciding which options an agent should acquire is known as the 
\textit{skill discovery problem}.  
A skill discovery algorithm must, through experience (and perhaps
additional advice or domain knowledge), acquire new options by specifying their initiation set, $I_o$, 
and termination condition, $\beta_o$. The option policy is usually specified
indirectly via  an option reward function, $R_o$, which is used to learn $\pi_o$. Each new skill is  
added to the set of options available to the agent with the aim of either solving the original
or subsequent tasks more efficiently. Our framework is agnostic to the specific
skill discovery method used (many exist). 

\subsection{Representation Acquisition}

While skill acquisition allows an agent to construct higher-level actions, it alone is insufficient
for constructing a true abstraction hierarchy because the agent must still plan in the original state space, no matter how abstract its actions become.
A complementary approach is taken by recent work
on \textit{representation acquisition} 
\cite{Konidaris14}, which considers
the question of constructing a symbolic description of an SMDP suitable for high-level planning.
Key to this is the definition of a symbol as a name referring to a set of states:
\begin{mydef}
A  propositional symbol $\sigma_Z$ is the
name of a test $\tau_Z$,
and corresponding set of states $Z = \{ s~\in S~|~\tau_Z(s)~=~1\}$.
\end{mydef}

The test, or \textit{grounding classifier}, is a compact representation of a (potentially
uncountably infinite) set of states (the \textit{grounding set}). Logical operations (e.g., \texttt{and}) using the resulting symbolic names have the semantic meaning of set operations (e.g., $\cap$) over the grounding sets,
which allows us to reason about which symbols (and corresponding grounding classifiers)
an agent should construct in order to be able to determine
the feasibility of high-level plans composed of sequences of options. We use the \textit{grounding operator}
$\mathcal{G}$ to obtain the grounding set of a symbol or symbolic expression; for example,
$\mathcal{G}(\sigma_Z) = Z$, $\mathcal{G}(\sigma_A \textrm{ and } \sigma_B) = A \cap B$.
For convenience we also define $\mathcal{G}$ over collections of symbols; for a
set of symbols $A$, we define 
$\mathcal{G}(A) = \cup_i \mathcal{G}(a_i), \forall a_i \in A$. 

\citet{Konidaris14} 
showed that defining a symbol for each option's initiation set and the symbols necessary
to compute its image (the set of states the agent might be in after executing the option
from some set of starting states) are \textit{necessary and sufficient for planning using
that set of options}. The feasibility
of a plan is evaluated by computing  each successive option's image, 
and then testing whether it is a subset of the next option's initiation set.
Unfortunately, computing the image of an option is intractable in the general case. 
However, the definition of the image  for two common classes of options is both natural and computationally
very simple. 

The first is the subgoal option: the option
reaches some set of states and terminates, and the state  it terminates in can be considered independent
of the state execution began in. In this case we can create a symbol for that set (called the \textit{effect set}---the
set of all possible states the option may terminate in),
and use it directly as the option's image. We thus obtain $2n$ symbols for $n$ options (a symbol for each option's
initiation and effect sets), from which we can build a  
\textit{plan graph} representation: a graph with $n$ nodes, and an edge from node $i$ to node $j$ if 
option $j$'s initiation set is a superset of option $i$'s effect set. Planning amounts to finding a path in the plan graph;
once this graph has been computed, the grounding
classifiers can be discarded.

The second class of options are \textit{abstract subgoal} options: the low-level
state is factored, and some variables are set to a subgoal (again, independently of the starting state) while others
remain unchanged. The image operator can then be computed using the intersection of the effect set (as in the subgoal
option case) and the starting state classifier with the modified factors projected out.
This results in a STRIPS-like factored representation which can be automatically converted to PDDL
\cite{McDermott98} and used as input to an off-the-shelf task planner. After this conversion 
the grounding classifiers can again be discarded.

\section{Constructing Abstraction Hierarchies}

\begin{figure*}[!!ht]
\centering
\subcaptionbox{}[0.24\linewidth][l]
{
\includegraphics[width=0.15\linewidth]{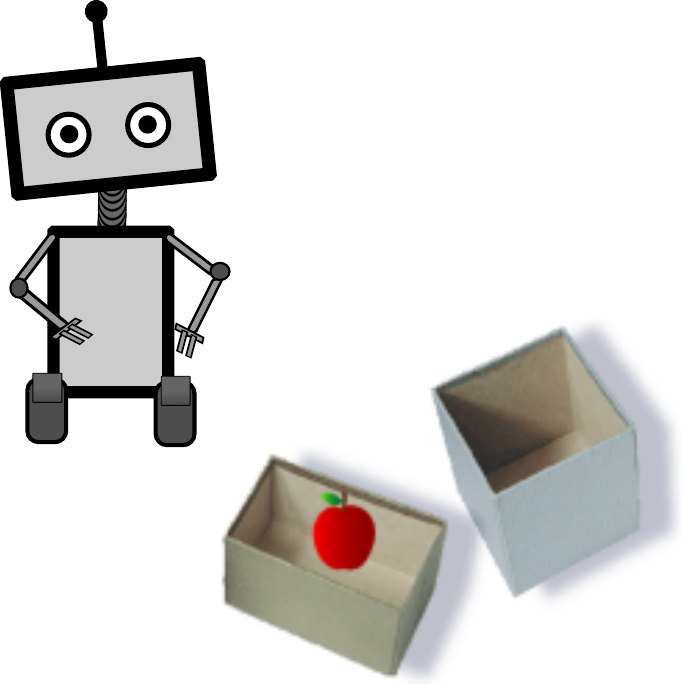}
}
\subcaptionbox{}[0.2\linewidth][l]
{
\begin{tabular}{l}
$S1:$\\
\textit{above-box-1} $\times$\\
\textit{above-box-2} $\times$\\
\textit{pregrasped} $\times$\\
\textit{grasped} $\times$\\
\textit{apple-in-box-1} $\times$\\
\textit{apple-in-box-2} \\
\end{tabular}
}
\subcaptionbox{}[0.2\linewidth]
{
\begin{tabular}{l}
$S2:$\\
\textit{grabbed} $\times$\\
\textit{above-box-1} $\times$\\
\textit{above-box-2} $\times$\\
\textit{apple-in-box-1} $\times$\\
\textit{apple-in-box-2} \\
\\
\end{tabular}
}
\subcaptionbox{}[0.2\linewidth]
{
\begin{tabular}{l}
$S_3:$\\
\textit{apple-in-box-1} $\times$\\
\textit{apple-in-box-2}\\
\\
\\
\\
\\
\end{tabular}
}

\caption{A robot must move an apple between two boxes (a). Given a set of motor primitives it can
form a discrete, factored state space (b). Subsequent applications of skill acquisition result in successively more
abstract state spaces (c and d).}
\label{RobotAppleFig}
\end{figure*}

These results show that the two fundamental aspects of hierarchy---skills and 
representations---are tightly coupled: skill acquisition drives representational 
abstraction.  An agent that has performed skill acquisition in an MDP to obtain
higher-level skills can automatically determine a new abstract state representation suitable
for planning in the resulting SMDP.  We now show that these two processes can be alternated
to construct an abstraction hierarchy.

We assume the following setting: an agent is faced with some base MDP
$M_0$, and aims to construct an abstraction hierarchy that
enables efficient planning for new problems posed in $M_0$, each of which is 
specified by a start
and goal state set.  $M_0$ may be continuous-state and
even continuous-action, but  all subsequent levels of the
hierarchy will be constructed to be discrete-state and discrete-action. We adopt the following
 definition of an abstraction hierarchy: 
\begin{mydef}
An $n$-level hierarchy on base MDP $M_0 = \left(S_0, A_0, R_0, P_0 \right)$
is a collection of MDPs $M_i = \left(S_i, A_i, R_i, P_i \right)$, $i \in \{1, ..., n\}$,
such that each action set $A_j$, $0 < j \leq n$, is a set of options defined over
$M_{j-1}$ (i.e., $M_{{j-1}^+} = \left(S_{j-1}, A_j, R_{j-1}, P_{j-1}\right)$ is an SMDP). 
\label{HierarchyDef}
\end{mydef}

This captures the core assumption behind  hierarchical reinforcement
learning: hierarchies are built through macro-actions. Note that this formulation
 retains the downward refinement property from classical hierarchical
planning \cite{Bacchus91}---meaning that a plan at level $j$
can be refined to a plan at level $j-1$ without backtracking to level $j$ or higher---because a policy at any level 
is also a (not necessarily Markovian \cite{Sutton99}) policy at any
level lower, including $M_0$.
However, while 
Definition \ref{HierarchyDef} links the action set of each MDP to the action set of its predecessor in the hierarchy,
it says nothing about how to link their state spaces. 
To do so, we must in addition determine how to construct a new state space $S_{j}$,
transition probability function $P_{j}$, and reward function $R_j$.

Fortunately, this is exactly what representation acquisition provides: a method for
constructing 
a new symbolic representation suitable for planning in $M_{{j-1}^+}$ using the options in $A_j$. 
This provides a new state space $S_{j}$, which, combined with $A_j$, specifies
$P_j$. The only remaining component is the reward function.
A representation construction algorithm based on sets \cite{Konidaris14}---such as we adopt here---is 
insufficient for reasoning about expected rewards,
which requires a formulation based on distributions \cite{Konidaris15}. For simplicity, we can remain consistent
and simply set the reward to a uniform transition penalty of $-1$; alternatively, we can adopt just one 
aspect of the distribution-based representation
and set $R_j$ to the empirical mean of the rewards obtained when executing each  option.

Thus,
we have all the components required to build level $j$ of the hierarchy from level $j-1$. This procedure
can be repeated in a \textit{skill-symbol loop}---alternating skill acquisition and representation acquisition
phases---to construct an abstraction hierarchy.
 It is important to note that there are no degrees of
freedom or design choices in the representation acquisition phase of the skill-symbol 
loop; the  algorithmic questions reside solely  in determining which skills to acquire
at each level.

This construction results in  a specific relationship between MDPs in a hierarchy:  every state at level $j$ 
refers to a set of states
at level $j-1$.\footnote{Note that  $S_{j+1}$ is not necessarily a \textit{partition} of $S_{j}$---the grounding sets of
two states in $S_{j+1}$ may overlap.} 
A grounding in $M_0$ can therefore be computed for any state at level $j$ in the hierarchy by 
applying the grounding operator $j$ times. If we denote this ``final grounding'' operator as $\mathcal{G}_0$, then 
$\forall j, s_j \in S_j, \exists Z_0 \subseteq S_0$ such that $\mathcal{G}_0(s_j) = Z_0$. 

We now illustrate the construction of an abstraction hierarchy via an example---a very simple
task that must be solved by a complex agent. 
Consider a robot in a room with two boxes, one containing an apple (Figure \ref{RobotAppleFig}a).
The robot must occasionally move the apple from one box to the other. Directly accomplishing
this involves solving a high-dimensional motion planning problem, so instead the robot is given five
motor skills: \textit{move-gripper-above1} and \textit{move-gripper-above2} use 
motion planning to move the robot's
gripper above each box; \textit{pregrasp} controls the
 gripper so that it cages the apple,
and is only executable from above it; \textit{grasp} can be executed following
\textit{pregrasp}, and runs a gradient-descent based controller to achieve wrench-closure on the apple; 
and \textit{release} drops the apple. These form $A_1$, the actions in the first level of the hierarchy,
and since they are  abstract subgoal options the robot  automatically constructs
a factored state space (see Figure \ref{RobotAppleFig}b) 
that specifies $M_2$. This enables abstract planning---the state space is independent of the complexity of the robot,
although $S_2$  contains some low-level details (e.g., \textit{pregrasped}).

Applying a skill discovery algorithm in $M_2$, the robot detects that 
\textit{pregrasp} is always followed by \textit{grasp}, and therefore replaces these actions with
 \textit{grab-apple}, which together with the remaining skills in $A_1$ forms $A_2$.
This results in a smaller MDP, $M_2$ (Figure \ref{RobotAppleFig}c), which is a good
abstract model of the task. Applying a skill
discovery algorithm to $M_2$ creates a skill that picks up the apple in whichever box it is in, and
moves it over the other box. $A_3$ now consists of just a single action, \textit{swap-apple},  requiring
 just two propositions to define $S_3$: \textit{apple-in-box-1}, and \textit{apple-in-box-2} (Figure \ref{RobotAppleFig}d). 
 The abstraction hierarchy
 has abstracted away the details of the robot (in all its complexity) and 
 exposed the (almost trivial) underlying task structure. 

\section{Planning Using an Abstraction Hierarchy}

Once an agent has constructed an abstraction hierarchy, it must be able to
use it to rapidly find plans for new problems. We formalize this process as the agent posing a \textit{plan query} to the hierarchy,
which should then be used to generate a plan for solving the problem described by the query. 
 We 
adopt the following  definition of a 
plan query:
\begin{mydef}
A \emph{plan query} is a tuple $(B, G)$, where $B \subseteq S_0$ is the set of base MDP states from which execution may begin, and 
$G \subseteq S_0$ (the goal) is the set of base MDP states in which the agent wishes to find itself following execution.
\end{mydef}

The critical question is \textit{at which level} of the hierarchy planning should take place.
We first define a useful predicate, planmatch, which determines whether an agent should
attempt to plan at level $j$ (see Figure \ref{SelectLevelFig}): 
\begin{mydef} 
A pair of abstract state sets $b$ and $g$  match a plan query $(B, G)$
(denoted \emph{planmatch$(b, g, B, G)$}) when $B \subseteq \mathcal{G}_0(b)$ and 
$\mathcal{G}_0(g) \subseteq G$.
\end{mydef}
\begin{mytheorem}
A plan can be found to solve plan query $(B, G)$ at level $j$ iff 
 $\exists b, g \subseteq S_j$ such that planmatch$(b, g, B, G)$, 
 and there is a feasible plan in $M_i$ from every state in $b$ to some state in $g$.   
 \label{MainTheorem}
 \end{mytheorem}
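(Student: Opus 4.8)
The plan is to prove the two directions of the biconditional separately, in each case exploiting two facts already in hand: the downward refinement property noted after Definition~\ref{HierarchyDef} (any plan at level $j$ refines, without backtracking, to an executable policy in $M_0$), and the result of \citet{Konidaris14} that the initiation-set and image symbols constructed at each level are necessary and sufficient for evaluating plan feasibility using that level's options. Together these let me pass freely between abstract feasibility at level $j$ and concrete executability in $M_0$, with the final-grounding operator $\mathcal{G}_0$ serving as the dictionary between the two levels. (I also read the $M_i$ in the statement as $M_j$.)

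For the sufficiency direction ($\Leftarrow$), I would assume witnesses $b, g \subseteq S_j$ with \emph{planmatch}$(b, g, B, G)$ and a feasible level-$j$ plan $\rho$ from every state in $b$ to some state in $g$. By the \citet{Konidaris14} result, feasibility of $\rho$ means each option's image is a subset of the next option's initiation set, so $\rho$ executes from every abstract state of $b$; downward refinement turns $\rho$ into an executable policy in $M_0$. I would then fix an arbitrary concrete start state $s \in B$. Since \emph{planmatch} gives $B \subseteq \mathcal{G}_0(b)$, the state $s$ lies in the grounding of some abstract state of $b$, so the refined policy is executable from $s$ and, tracking $\rho$, terminates in the grounding of some state of $g$; and since \emph{planmatch} also gives $\mathcal{G}_0(g) \subseteq G$, that terminal state lies in $G$. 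As $s \in B$ was arbitrary, the refined $\rho$ solves $(B, G)$, and it was obtained purely by planning at level $j$.

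For the necessity direction ($\Rightarrow$), I would assume a plan $\rho$ solving $(B, G)$ has been found at level $j$ and recover the required sets. The plan is a sequence of level-$j$ options; I would take $b$ to be the collection of level-$j$ states whose groundings the first option's initiation set must cover in order to execute from all of $B$ (so that $B \subseteq \mathcal{G}_0(b)$ by construction), and take $g$ to be the symbolic image of $\rho$'s final option computed at level $j$. The feasibility of $\rho$ from every state of $b$ to some state of $g$ is then precisely the chain of tests --- each option's image is a subset of the next option's initiation set --- that a plan found at level $j$ satisfies, again by \citet{Konidaris14}. The condition $\mathcal{G}_0(g) \subseteq G$ follows because $\rho$ was required to land inside $G$, so its concrete image cannot escape $G$.

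I expect the necessity direction to be the main obstacle, and within it the construction of $b$ and $g$ so that the quantifier structure lines up: ``every state in $b$'' on the abstract side must cover ``every state in $B$'' concretely, while ``some state in $g$'' must fall inside $G$ \emph{simultaneously} for all starting states. Because $S_j$ need not partition $S_{j-1}$ (the footnote to the construction warns that distinct abstract states may have overlapping groundings), a single concrete start state can ground several abstract states at once, and I must argue that $\rho$ behaves consistently across this overlap rather than only for one chosen representative. Nailing down that consistency through the exact semantics of the symbolic image operator of \citet{Konidaris14} is the delicate step; by comparison the sufficiency direction and the grounding bookkeeping are routine.
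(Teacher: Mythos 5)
Your proposal is correct and follows essentially the same route as the paper's own proof: both directions rest on the \citet{Konidaris14} soundness-and-completeness result linking a feasible plan in $M_j$ from $b$ to $g$ with guaranteed execution from $\mathcal{G}_0(b)$ into $\mathcal{G}_0(g)$, combined with the two planmatch inclusions $B \subseteq \mathcal{G}_0(b)$ and $\mathcal{G}_0(g) \subseteq G$ to tie this to the query. The paper compresses this into a single two-sentence biconditional chain and does not separately address the overlapping-grounding subtlety you raise at the end, so your version is, if anything, more explicit than the original.
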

\begin{proof}
The MDP at level $j$ is constructed such that a plan $p$ starting from any state in $\mathcal{G}(b)$ (and hence also $\mathcal{G}_0(b)$) is guaranteed
to leave the agent in a state in $\mathcal{G}(g)$ (and hence also $\mathcal{G}_0(g)$) iff $p$ is a plan in MDP $M_j$ from $b$ to $g$ \cite{Konidaris14}. 

Plan $p$ is additionally valid from $B$ to $G$ iff $B \subseteq \mathcal{G}_0(b)$ (the start state at level $j$ refers to a set that includes all query start states) and $\mathcal{G}_0(g) \subseteq G$ (the  query goal includes all states referred to by the goal at level $j$).
\qedhere
\end{proof}

\begin{figure}[!!!ht]
\centering
\vspace{-10pt}
\includegraphics[width=0.8\linewidth]{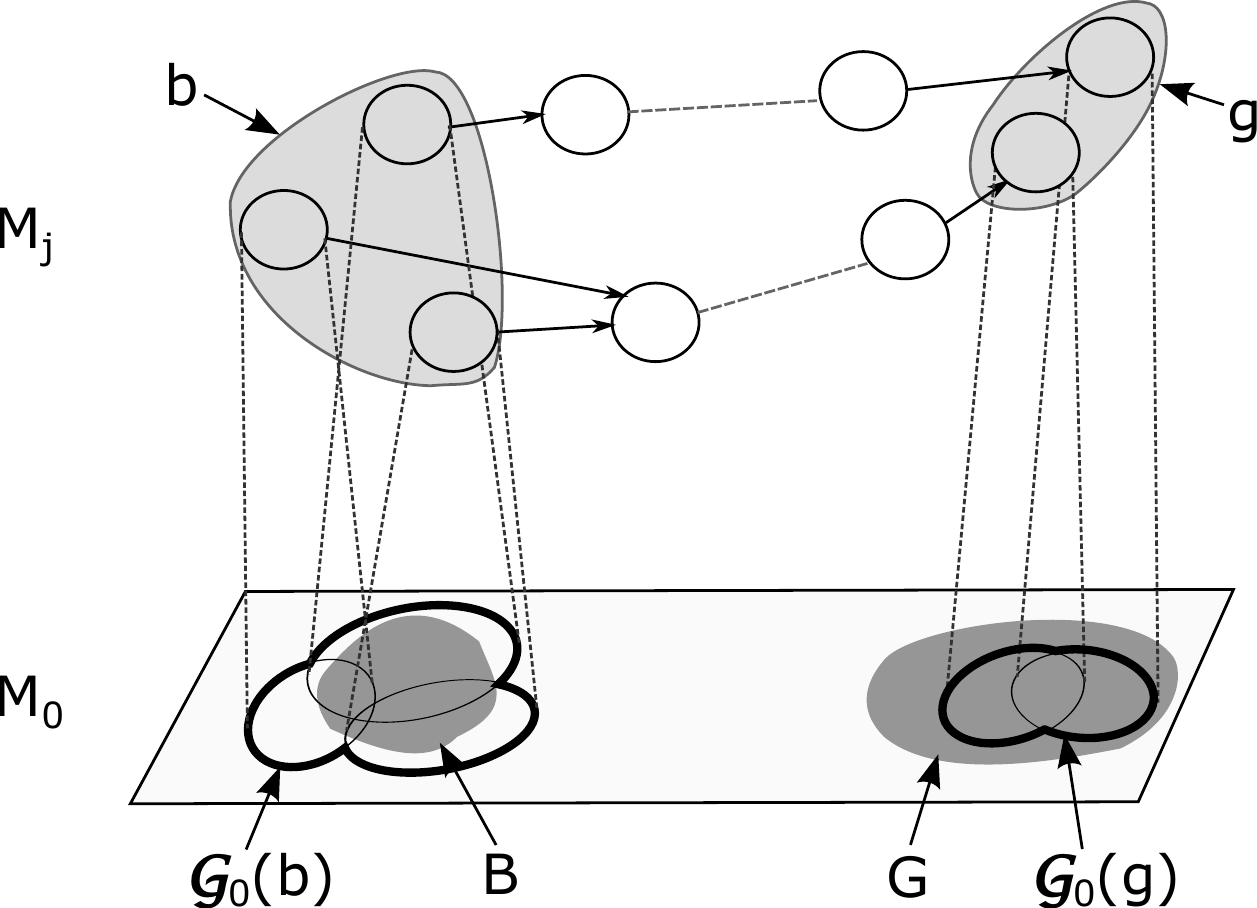}
\caption{The conditions under which a plan at MDP $M_j$ answers 
a plan query with start state set $B$ and goal state set $G$ in the base MDP $M_0$. 
A pair of state sets $b, g \subseteq S_j$ are required such that
 $B \subseteq \mathcal{G}_0(b)$,  $\mathcal{G}_0(g) \subseteq G$, and a plan exists in $M_j$
 from every state in $b$ to some state in $g$.}
\label{SelectLevelFig}
\end{figure}

Note that $b$ and $g$ may not be unique, even within a single level: because $S_{j}$ is not necessarily a partition of $S_{j-1}$, there may be multiple states, 
or sets of states, at each level whose final groundings are included by $G$ or include $B$;
a solution from \textit{any} such $b$ to any such $g$ is
sufficient.
For efficient planning it is better for 
$b$ to be a small set to reduce the number of start states
while remaining large enough to subsume $B$; if $b = S_j$ then answering the plan query
requires a complete policy for $M_j$, rather than a plan. However, finding a minimal subset is 
computationally difficult. One approach is to build the maximal candidate set $b = \{s | \mathcal{G}_0(s) \cap B \neq \emptyset, s \in S_j\}$.
This is a superset of any start match, and a suitable one exists at this level
 if and only if $B \subseteq \cup_{s\in b}\mathcal{G}_0(s)$. 
Similarly, 
 $g$ should be maximally large (and so easy to reach)  while remaining small enough so that its grounding
 set lies within $G$. At each level $j$, we can therefore collect all states that ground out to subsets of $G$:
 $g = \{s | \mathcal{G}_0(s) \subseteq G,  s \in S_j\}$. These approximations result in a unique pair of sets of states at each level---at the
 cost of potentially including unnecessary states in each set--- and can be computed in time linear in $|S_j|$.

It follows from the state abstraction properties 
of the hierarchy that
 a planmatch at level $j$ implies
the existence of a planmatch at all levels below $j$.
\begin{mytheorem}
Given a hierarchy of state spaces $\{S_0, ..., S_n\}$ constructed as above and plan query $(B, G)$, 
if $\exists b, g \subseteq S_j$ such that planmatch$(b, g, B, G)$,
for some $j, n \geq j > 0$, then $\exists b', g'  \subseteq S_{k}$
such that  planmatch$(b', g', B, G)$, $\forall k \in \{0, ..., j-1\}$. 
\end{mytheorem}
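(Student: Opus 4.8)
The plan is to prove the single-step version of the claim—that a planmatch at level $j$ yields a planmatch at level $j-1$—and then obtain the full statement by a trivial downward induction on $k$ from $j-1$ to $0$. So I would assume we are given $b, g \subseteq S_j$ with planmatch$(b, g, B, G)$, i.e. $B \subseteq \mathcal{G}_0(b)$ and $\mathcal{G}_0(g) \subseteq G$, and construct a witnessing pair one level down.

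The natural candidates are the one-level groundings $b' = \mathcal{G}(b) \subseteq S_{j-1}$ and $g' = \mathcal{G}(g) \subseteq S_{j-1}$, using the set-level definition $\mathcal{G}(b) = \bigcup_{s \in b} \mathcal{G}(s)$ from the excerpt. First I would establish the key compositional identity $\mathcal{G}_0(\mathcal{G}(b)) = \mathcal{G}_0(b)$ (and likewise for $g$). This follows from two facts already available: (i) the final grounding $\mathcal{G}_0$ on a state of $S_j$ is by definition $j$ iterated applications of $\mathcal{G}$, so peeling off a single application lands in $S_{j-1}$ and leaves exactly $\mathcal{G}_0$ as it acts on level $j-1$, giving $\mathcal{G}_0(\mathcal{G}(s)) = \mathcal{G}_0(s)$ for a single state $s$; and (ii) $\mathcal{G}_0$ distributes over unions, by the definition of $\mathcal{G}$ over collections. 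Chaining these, $\mathcal{G}_0(b') = \bigcup_{s \in b}\mathcal{G}_0(\mathcal{G}(s)) = \bigcup_{s \in b}\mathcal{G}_0(s) = \mathcal{G}_0(b)$.

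With the identity in hand, the two planmatch conditions transfer verbatim: $B \subseteq \mathcal{G}_0(b) = \mathcal{G}_0(b')$ supplies the start condition, and $\mathcal{G}_0(g') = \mathcal{G}_0(g) \subseteq G$ supplies the goal condition, so planmatch$(b', g', B, G)$ holds. Iterating—replacing $(b,g)$ at level $j$ by $(b',g')$ at level $j-1$ and repeating the same step—produces a matching pair at every level $k \in \{0, \dots, j-1\}$, which is exactly the claim.

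I do not expect a genuine obstacle; the only point requiring care is the bookkeeping of the grounding operator across levels—precisely, the meaning of ``applying $\mathcal{G}_0$ one level earlier,'' which is what fact (i) pins down. One might also worry that because $S_j$ need not be a partition of $S_{j-1}$ the sets $\mathcal{G}(s)$ for $s \in b$ may overlap, but this is harmless: every identity above is phrased through unions, which absorb overlaps, so the non-partition structure plays no role. The substantive content is therefore entirely contained in the identity $\mathcal{G}_0 \circ \mathcal{G} = \mathcal{G}_0$, after which the rest is immediate.
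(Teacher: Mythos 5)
Your proposal is correct and matches the paper's own proof essentially step for step: the same witnesses $b' = \mathcal{G}(b)$, $g' = \mathcal{G}(g)$, the same key identity $\mathcal{G}_0(b') = \mathcal{G}_0(b)$ from the definition of the final grounding operator, and the same iteration down to any $k < j$. Your writeup is somewhat more explicit than the paper's about why the identity holds (peeling off one application of $\mathcal{G}$ and distributing over unions), but the argument is the same.
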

\begin{proof}
We first consider $k = j - 1$. 
Let $b' = \mathcal{G}(b)$, and $g' = \mathcal{G}(g)$. Both are, by definition, sets of states in $S_{j-1}$.
By definition of the final grounding operator, $\mathcal{G}_0(b) = \mathcal{G}_0(b')$ and 
$\mathcal{G}_0(g) = \mathcal{G}_0(g')$, and hence
$B \subseteq \mathcal{G}_0(b')$ and $\mathcal{G}_0(g') \subseteq G$. This process can be
repeated to reach any  $k < j$.
\qedhere
\end{proof}

Any plan query therefore has a unique highest level $j$ containing a planmatch.
%
%
This leads directly to Algorithm \ref{PlanAlg}, which starts
 looking for a planmatch at the highest level of the hierarchy and proceeds
downwards; it is sound and complete by
 Theorem \ref{MainTheorem}.

\begin{algorithm}
\KwIn{MDP hierarchy $\{M_0, ..., M_n\}$, query $(B, G)$.}
\For{j $\in \{n, ..., 0\}$}{
 \For{$\forall b, g \subseteq S_j$ s.t. \emph{planmatch}($b, g, B, G$)}
  {
    $\pi \leftarrow $ findplan($M_j, b, g$)\;
     \If{$\pi \neq $ \emph{null}}
     {
      \KwRet{$(M_j, \pi)$}
     }
  }
}
\KwRet{\emph{null}}\;
\caption{A simple hierarchical planning algorithm.}
\label{PlanAlg}
\end{algorithm}

The complexity of Algorithm \ref{PlanAlg} depends on its
 two component algorithms: one used to find a planmatch, and another to attempt
to find a plan (possibly with multiple start states and goals).
We denote the complexity of these algorithms as $m(|S|)$ (linear using the approach
described above) and $p(|S|)$, for
a problem with $|S|$ states, respectively. 
The complexity of finding a plan at level $l$,
where the first match is found at level $k \geq l$, is given by
$h(k, l, M) = \sum_{a=k+1}^{n} m(|S_a|) + \sum_{b=l}^k \left[ m(|S_b|) + p(|S_b|) \right],$
for a hierarchy $M$ with $n$ levels. The first term corresponds to the search for the level with the first planmatch;
the second term for the repeated planning at levels that contain a match but not a plan (a planmatch does not necessarily
mean a plan exists at that level---merely that one could). 


\section{Discussion}

The formula for $h$ highlights the fact that hierarchies make some problems easier to solve
and others harder: in the worst case, a problem that should take $p(|S_0|)$ time---one
 only solvable via the base MDP---could instead take  $\sum_{a=0}^n \left[ m(|S_a|) + p(|S_b|) \right]$ time. A key
 question is therefore how to balance the depth of the hierarchy, the rate
at which the state space size diminishes as the level increases, which specific skills to discover
at each level, and how to control false positive plan matches,  to reduce
planning time. 

Recent work has highlighted the idea that skill discovery algorithms should
aim to reduce average planning or learning time across a target distribution of tasks
\cite{Simsek08,Solway14}. Following this logic, a hierarchy $M$ for some distribution
of over task set $T$ should be constructed so as to minimize
$\int_T  h(k(t), l(t), M) P(t)dt,$
where $k$ and $l$ now both depend on each task $t$. Minimizing this quantity
over the entire distribution seems infeasible; an acceptable substitute may be to
assume that the tasks the agent has already experienced 
are drawn from the same distribution as those it will experience in the future, 
and to construct the hierarchy that minimizes $h$ averaged over past tasks.

The form of $h$ suggests two important
principles which may aid the more direct design of 
skill acquisition algorithms. One is that deeper hierarchies are not necessarily
better; each level adds potential planning and matching costs, and must be justified by 
a rapidly diminishing state space size and a high likelihood of  solving 
tasks at that level. 
Second, false positive plan matches---when 
a pair of states that match the query  is found at some level at which a plan
cannot be found---incur a significant time penalty. The hierarchy should
therefore ideally be constructed 
 so that  every likely goal state at each level is reachable from every likely start state at that level.

An agent that generates its own goals---as a completely autonomous agent would---could 
do so by selecting an existing state from an
MDP at some level (say $j$) in the hierarchy. In that case it need not search for a matching level, and could
instead immediately plan at level $j$, though it may still need to drop to lower levels if no plan is found in $M_j$.  

\section{An Example Domain: Taxi} 

We now explain the construction and use of an abstraction hierarchy for a common hierarchical reinforcement
learning benchmark: the 
Taxi  domain \cite{Dietterich00}, depicted in Figure \ref{TaxiGraph}a. A taxi must navigate a $5 \times 5$ grid, which contains
a few walls, four depots (labeled red, green, blue, and yellow), and a passenger. The taxi may
move one square in each direction (unless impeded by a wall), pick up a passenger (when occupying the same
square), or drop off a passenger (when it has previously picked the passenger up). 
A state at base MDP $M_0$ is described by $5$ state variables: the $x$ and $y$ location
of the taxi and the passenger, and whether or not the passenger is in the taxi. This results in a 
total of $650$ states ($25 \times 25 = 625$ states for when the passenger is not in the taxi, plus
another $25$ for when the passenger is in the taxi and they are constrained to have the same 
location). 

\begin{figure}[!!!ht]
\centering
\subcaptionbox{}[0.7\linewidth]
{
\includegraphics[width=0.35\linewidth]{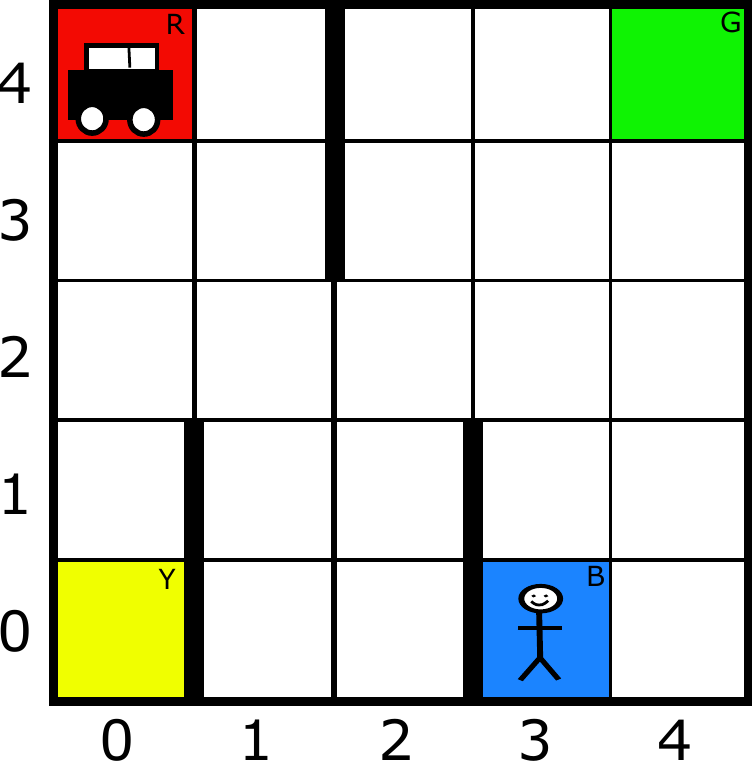}
}\\
\vspace{10pt}
\subcaptionbox{}[0.6\linewidth]
{
\includegraphics[width=0.6\linewidth, angle=90]{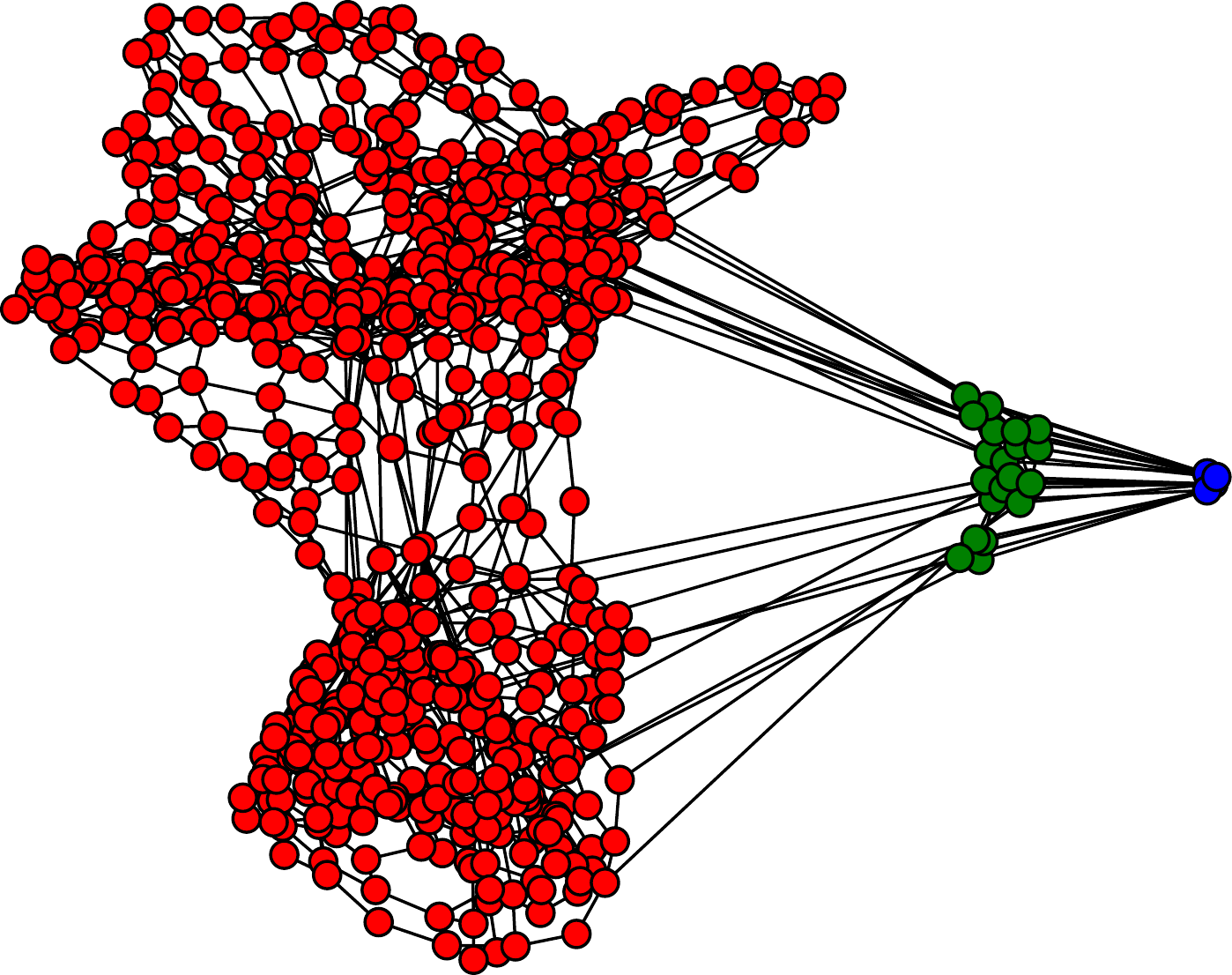}
}
\caption{The Taxi Domain (a), and its induced 3-level hierarchy. The base MDP contains $650$ states (shown in red),
which is abstracted to an MDP with $20$ states (green) after the first level of options, and one with $4$ states (blue) after the second. At the base level, the agent makes decisions about moving the taxi one step at a time; 
at the second level, about moving the taxi between depots; 
at the third, about moving the passenger between depots.}
\label{TaxiGraph}
\end{figure}

 We now describe the construction of a hierarchy for the taxi domain using hand-designed
 options at each level, and 
present some results for planning using Algorithm \ref{PlanAlg} for three example plan queries.

\begin{figure*}[!!!t]
\centering
\begin{tabular}{ccrrrrr}
\toprule
 & & \multicolumn{3}{c}{\textbf{Hierarchical Planning}} & & \\ \cmidrule(r){3-5}
\textbf{Query} & \textbf{Level} & \textbf{Matching} & \textbf{Planning} & \textbf{Total} & \textbf{Base + Options} & \textbf{Base MDP} \\
\midrule
1 & 2 & $<$1 & $<$1 & \textbf{$<$1} & 770.42 & 1423.36 \\
2 & 1 & $<$1 & 10.55 & \textbf{11.1} & 1010.85 & 1767.45 \\
3 & 0 & 12.36 & 1330.38 & 1342.74 & \textbf{1174.35} & \textbf{1314.94} \\
\bottomrule
\end{tabular}
\caption{Timing results for three example queries in the Taxi domain. 
The final three columns compare the total time for planning using the hierarchy, by planning in the SMDP obtained by adding all options into the base MDP 
(i.e., using options but not changing the representation), and by flat planning in the base MDP.
 All times are in milliseconds and are averaged over $100$ samples, obtained using a Java implementation run on a
 Macbook Air with a 1.4 GHz Intel Core i5 and 8GB of RAM.}
 \label{ResultsTab}
\end{figure*}

\paragraph{Constructing $M_1$.}
In this version of taxi, the agent is able to move the taxi to, and drop the passenger at, any square, but it expects to face a distribution of problems
generated by placing the taxi and the passenger at a depot at random, and selecting a random 
target depot at which the passenger must be deposited.
Consequently, we create navigation options for driving the taxi to each depot, and retain the existing put-down and pick-up 
options.\footnote{These roughly
correspond to the hand-designed hierarchical actions used in \citet{Dietterich00}.} 
These options over $M_0$ form the action set for level $1$ of the hierarchy: $A_1 = \{$\textit{drive-to-red}, \textit{drive-to-green}, \textit{drive-to-blue}, \textit{drive-to-yellow}, \textit{pick-up}, \textit{put-down}$\}$.

Consider the \textit{drive-to-blue-depot} option. It is executable in all states (i.e., its initiation
set is $S_0$), and terminates with the taxi's $x$ and $y$ position set to the position of the blue depot; 
if the passenger is in the taxi, their location is also set to that of the blue depot; otherwise, their location (and the
fact that they are not in the taxi) remains unchanged. It can therefore be partitioned 
into two abstract subgoal options: one, when the passenger is in the taxi, sets the $x$ and $y$ positions of the
taxi and passenger to those of the blue depot; 
another, when the passenger is not in the taxi, sets the taxi $x$ and $y$ coordinates and leaves those of the 
passenger unchanged. Both leave the \textit{in-taxi} state variable unmodified. Similarly, the \textit{put-down}
and \textit{pick-up} options are executable everywhere and when the taxi and passenger are in the same square,
respectively, and modify the \textit{in-taxi} variable while leaving the remaining variables the same. 
Partitioning all options in $A_1$ into abstract subgoal options results in 
 a factored state space consisting of $20$ reachable
states where the taxi or passenger are at the depot locations ($4 \times 4$ states for when the passenger is not in the taxi, plus $4$ for when they are).

\paragraph{Constructing $M_2$.} Given $M_1$, we now build the second level of the hierarchy by
 constructing options that pick up the passenger (wherever they are), move them
to each of the four depots, and drop them off. These options become 
$A_2 = \{$\textit{passenger-to-blue}, \textit{passenger-to-red},
\textit{passenger-to-green}, \textit{passenger-to-yellow}$\}$. Each option is executable whenever the passenger
is not already at the relevant depot, and it leaves the passenger and taxi at the depot, with the passenger 
 outside the taxi.  Since these are subgoal (as opposed to abstract subgoal) options, the resulting MDP, $M_1$, 
consists of only $4$ states (one for each location of the passenger) and is a simple (and coincidentally fully connected) graph.
The resulting hierarchy is 
depicted in
Figure \ref{TaxiGraph}b.

We used the above hierarchy to compute plans for three example queries, using dynamic programming and decision trees for planning
and grounding classifiers, respectively. The results are given in Table \ref{ResultsTab}; we next present each query, and step through the
matching process in detail.

\textbf{Example Query 1.}
Query $Q_1$ has the passenger start at the blue depot (with the taxi at an unknown depot) and request to be moved to the 
red depot. In this case $B_1$ refers to all states where the passenger is at the blue depot and the 
taxi is located at one of four depots, and $G_1$ similarly refers to the red depot. 
The agent must first determine the appropriate level to
plan at, starting from $M_2$,
the highest level of the hierarchy. It finds state $s_b$ where $\mathcal{G}_0(s_b) = B_1$ (and therefore $B_1 
\subseteq \mathcal{G}_0(s_b)$ holds), and $s_r$ where
$\mathcal{G}_0(s_r) = G_1$ (and therefore $\mathcal{G}_0(s_r) \subseteq G_1$), where $s_b$ and $s_r$ are the states in $M_2$ referring to the
passenger being located at the blue and red depots, respectively. Planning therefore consists of  finding
a plan from $s_b$ to $s_r$ at level $M_2$; this is virtually trivial (there are only four states in $M_2$
and the state space is fully connected).  

\textbf{Example Query 2.} Query $Q_2$ has the start state set as before, but now specifies a goal depot (the yellow depot) for the taxi. 
 $B_2$ refers to all states where the passenger is at the blue depot and the 
taxi is at an unknown depot, but $G_2$ refers to a single state.  
$M_2$ contains a state that has the same grounding set as $B_2$,  but no
state in $M_2$  is a subset of $G_2$ because no state in $M_2$ specifies the location of the taxi. The agent therefore
cannot find a planmatch for $Q_2$ at level $M_2$. 

At  $M_1$ no single state is a superset of $B_2$, but the agent finds 
 a collection of states $s_j$, such that $\mathcal{G}_0(\cup_j s_j) = B_2$. It also
finds a single state with the same grounding as $G_2$. Therefore, it builds
a plan at level $M_1$ for each state in $s_j$. 

\textbf{Example Query 3.} In query $Q_3$, the taxi begins at the red depot and the passenger
at the blue depot, and its goal is to leave the passenger at grid location $(1, 4)$, with the taxi goal
location left unspecified.
The start set, $B_3$, refers to a single state, and the goal set, $G_3$, refers to the set of states
where the passenger is located at $(1, 4)$. 

Again the agent starts at $M_2$. $B_3$ is a subset of the grounding of the single state in $M_2$ where the passenger is at the blue depot but the
taxi is at an unknown depot. However, $G_3$ is \textit{not} a superset of
any of the states in $M_2$, since none contain any state where the passenger
is not at a depot.  Therefore the agent cannot plan for $Q_3$ at level $M_2$.

At level $M_1$, it again find a state that is a superset of $B_3$, but no state that is a subset of $G_3$---all
states in $M_1$ now additionally specify the position of the taxi and passenger, but like the states in
$M_2$ they all fix the location
of the passenger at a depot. All state groundings are in fact disjoint from the grounding of $G_3$. 
The agent must therefore resort to planning in $M_0$, and the hierarchy does not
help (indeed, it results in a performance penalty due to the compute time to rule out $M_1$ and $M_2$).

\section{Summary}

We have introduced a framework for building abstraction hierarchies 
by alternating skill- and representation-acquisition phases.
The framework is completely automatic except for the choice of skill acquisition algorithm, to which our formulation is agnostic.
The resulting hierarchies combine temporal and state abstraction to realize
efficient planning and learning in the multi-task setting.


\bibliographystyle{aaai}
\bibliography{skillsymloop-aaai}

\end{document}